\newtheorem{thm}{Theorem}[section]
\newtheorem{problem}{Problem}
\title[Transfer Learning without Knowing the Unobserved Context]{Learning without Knowing: Unobserved Context in Continuous Transfer Reinforcement Learning}
\author{%
	\Name{Chenyu Liu}  
	\Email{chenyu.liu018@duke.edu}\\
	\Name{Yan Zhang} \Email{yan.zhang2@duke.edu}\\
	\Name{Yi Shen} \Email{yi.shen478@duke.edu}\\
	\Name{Michael M. Zavlanos} \Email{michael.zavlanos@duke.edu} \\
	\addr Department of Mechanical Engineering and Material Science \\
	Duke University, Durham, NC 27708, USA%
}
\begin{document}

\maketitle

\begin{abstract}%
	In this paper, we consider a transfer Reinforcement Learning (RL) problem in continuous state and action spaces, under unobserved contextual information. For example, the context can represent the mental view of the world that an expert agent has formed through past interactions with this world. We assume that this context is not accessible to a learner agent who can only observe the expert data. Then, our goal is to use the context-aware expert data to learn an optimal context-unaware policy for the learner using only a few new data samples. Such problems are typically solved using imitation learning that assumes that both the expert and learner agents have access to the same information. However, if the learner does not know the expert context, using the expert data alone will result in a biased learner policy and will require many new data samples to improve. 
	To address this challenge, in this paper, we formulate the learning problem as a causal bound-constrained Multi-Armed-Bandit (MAB) problem. The arms of this MAB correspond to a set of basis policy functions that can be initialized in an unsupervised way using the expert data and represent the different expert behaviors affected by the unobserved context. On the other hand, the MAB constraints correspond to causal bounds on the accumulated rewards of these basis policy functions that we also compute from the expert data. The solution to this MAB allows the learner agent to select the best basis policy and improve it online. And the use of causal bounds reduces the  exploration variance and, therefore, improves the learning rate.
	We provide numerical experiments on an autonomous driving example that show that our proposed transfer RL method improves the learner's policy faster compared to existing imitation learning methods and enjoys much lower variance during training.
\end{abstract}

\begin{keywords}%
	Transfer reinforcement learning, causal inference, multi-armed bandit algorithms.%
\end{keywords}

\section{Introduction}\label{sec:intro}

Reinforcement Learning (RL) has been widely used recently to solve stochastic multi-stage decision making problems in high dimensional spaces with unknown models, in areas as diverse as robotics \cite{zhu2017target} and medicine \cite{raghu2017continuous}.
Various methods have been proposed for this purpose that generally require large amounts of data to learn good control policies. Among these methods, Transfer Learning (TL) aims at addressing data efficiency in RL by using data from expert agents to learn optimal policies for learner agents with only a few new data samples \cite{taylor2009transfer,TLsurvey}. Nevertheless, a common assumption in TL methods is that both the expert and learner agents have access to the same information. This is not always the case in practice. For example, human drivers often develop a mental view of the world through past experiences that along with other sensory information, e.g., visual information, guides their actions. This additional experiential information cannot be obtained using, e.g., camera and Lidar sensors found onboard autonomous vehicles. As a result, using expert data without knowing the expert context can generate learned policies that are biased and may require many new data samples to improve.  

In this paper, we propose a new transfer RL method in continuous state and action spaces that can deal with unobservable contextual information. 
Specifically, we model the expert and learner agents using contextual Markov Decision Processes (MDP) \cite{contextual_MDP}, where the state transitions and the rewards are affected by contextual variables. Moreover, we assume that the expert agent can observe this contextual information and can generate data using a context-aware policy. The goal is to use the experience from the expert agent to enable a learner agent, who cannot observe the contextual information, to learn an optimal context-unaware policy with much fewer new data samples. 
In related literature, the contextual variables in contextual MDPs are typically used to model different tasks that the agents can accomplish. In these problems, transfer RL enables the use of expert data from many different source tasks to accelerate learning for a new desired target task, see, e.g., \cite{transfervaluefunction,transfersamples,transferpolicyfunction,barreto2017successor,transfersample2}. 
These methods typically assume that some or all source tasks share the same reward or transition functions with the target task. As a result,  the models learned from the expert's data can be easily transfered to the  learner agent. However, as discussed in \cite{causal_mab}, when contextual information about the world is unobserved, the constructed reward or transition models can be biased, and this can introduce errors in the policies learned by the learner agent.

Related is also recent work on Learning from Demonstrations (LfD); see, e.g., \cite{Gail,multi_modal,infogail,BC,lee2019efficient,ghasemipour2020divergence}. Specifically, \cite{Gail,BC} propose a method to learn a policy maximizing the likelihood of generating the expert's data, while \cite{ghasemipour2020divergence} propose a method to learn a policy that matches the expert-specified state distribution. However, these works assume that the expert and learner agents observe the same type of information. When the expert makes decisions based on additional contextual variables that are unknown to the learner, the imitated policy function can be suboptimal, as we later discuss in Section~\ref{sec:problem}. On the other hand, \cite{multi_modal,infogail,lee2019efficient} propose a method to learn a policy parameterized by latent contextual information included in the expert's dataset. Therefore, given a contextual variable, the learner agent can directly execute this policy. However, if the context can not be observed, these methods cannot be used to learn a context-unaware policy. LfD with contextual information  is also considered in \cite{de2019causal,etesami2020causal,zhang2020causal}. Specifically, \cite{de2019causal,etesami2020causal} assume that the expert's observations are accessible to the learner so that the environment model can be identified. On the other hand, \cite{zhang2020causal} provide conditions on the underlying causal graphs that guarantee the existence of a learner policy matching the expert's performance. However, such a policy does not necessarily exist for general TL problems where unobservable contextual information may affect the expert agent's decisions. 
Along with LfD, related is also literature on one/few-shot reinforcement learning \cite{duan2017one,Yu2019NIPS,Ghasemipour2019NIPS}, where the objective is to retrain an  trained policy for a new unseen scenario using only one or a few new expert demonstrations for this new scenario. However, in the TL problem considered here, no expert demonstrations are available for the unseen testing cases in which the contextual information is hidden and sampling of the contextual variable cannot be controlled.

To the best of our knowledge, most closely related to the problem studied in this paper are the works in \cite{causal_mab,transferRL}. Specifically, \cite{causal_mab} propose a TL method for Multi-Armed-Bandit (MAB) problems with unobservable contextual information using the causal bounds. Then, \cite{transferRL} extend this framework to general RL problems by computing the causal bounds on the value functions. However, the approach in \cite{transferRL} can only handle discrete state and action spaces and does not scale to high dimensional RL problems.
In this paper, we propose a novel TL framework under unobservable contextual information which can be used to solve continuous RL problems. Specifically, we first learn a set of basis policy functions that represent different expert behaviors affected by the unobserved context, using the unsupervised clustering algorithms in \cite{multi_modal,infogail,lee2019efficient}.
Although the return of each data sample is included in the expert dataset, as discussed in \cite{causal_mab}, it is impossible to identify the expected returns of the basis policy functions when the context is unobservable. Therefore, the basis policy functions cannot be transferred directly to the learner agent. Instead, we use them as arms in a Multi-Armed Bandit (MAB) problem that the learner agent can solve to select the best basis policy function and improve it online. Specifically, we propose an Upper Confidence Bound (UCB) algorithm to solve this MAB that also utilizes causal bounds on the expected returns of the basis policy functions computed using the expert data. These causal bounds allow to reduce the exploration variance of the UCB algorithm and, therefore, accelerate its learning rate, as shown also in  \cite{causal_mab}. We provide numerical experiments on an autonomous driving example that show that our proposed TL framework requires much fewer new data samples than existing TL methods that do not consider the presence of unobserved contextual information and learn a single policy function using the expert dataset.

The rest of this paper is organized as follows. In Section \ref{sec:problem}, we define the proposed TL problem in continuous state and action spaces with unobservable contextual information. In Section \ref{sec:methods}, we formulate the MAB problem for the learner agent and develop the proposed causal bound-constrained UCB algorithm to solve it. In Section \ref{sec:experiment}, we provide numerical experiments on an autonomous driving example that illustrate the effectiveness of our method. Finally, in Section \ref{sec:conclusion}, we conclude our work. 

\section{Preliminaries and Problem Definition}\label{sec:problem}
In this section, we define the contextual Markov Decision Processes (MDP) used to model the RL agents, and formulate the TL problem that uses experience data from a context-aware expert agent to learn an optimal policy for a context-unaware learner agent. 
\subsection{Contextual MDPs}
Consider a contextual MDP defined by the tuple $\left(s_t, a_t, f^U(s_{t+1}|s_t, a_t), r^U(s_t, a_t), \rho_{0}\right)$ as in \cite{contextual_MDP}, where $s_t \in \mathcal{S}$ represents the state of the agent at time $t$, and $a_t \in \mathcal{A}$ denotes the action of the agent at time $t$. Unlike~\cite{transferRL}, in this paper, we assume that the state and action spaces $\mathcal{S}$ and $\mathcal{A}$ can be either discrete or continuous. The transition function $ f^U(s_{t+1}|s_t, a_t)$ represents the probability of transitioning to state $s_{t+1}$ after applying action $a_t$ at state $s_t$, the reward function $r^U(s_t, a_t)$ denotes the reward received by the agent after applying action $a_t$ at state $s_t$, and $\rho_{0}: \mathcal{S} \rightarrow \mathbb{R}$ represents the distribution of the initial state $s_{0}$. In a contextual MDP, the transition and reward functions are parameterized by a context variable $U \in \mathcal{U}$, where  $\mathcal{U}$ is a discrete or continuous space. The random variable $U$ is sampled from a stationary distribution $P(U)$ at the beginning of each episode. The transition and reward functions are parameterized by contextual variables in many practical control and learning problems. 
For example, in human-in-the-loop control \cite{zhou2020human}, the context $U$ can represent hidden human preference and affect the reward function. In what follows, we define the context-aware policy function of the expert by $\pi_e(s_t,a_t,U) :\mathcal{S} \times \mathcal{A} \times \mathcal{U} \rightarrow [0,1]$, that represents the probability of selecting action $a_t$ at state $s_t$ given the contextual variable $U$.
We also assume that the expert implements a context-aware policy $\pi_e^\ast$ that approximately solves the problem $\max_{\pi_d} \; \mathbb{E}_{\rho_{0}} \big[\sum_{t=0}^{T} \gamma^{t}r^U(s_{t}, a_{t}) | \pi_e(s_t, a_t, U) \big] \text{ for all } U \in \mathcal{U}$,
where $\gamma \leq 1$ is the discount factor. Note that the expert policy $\pi_e^\ast$ does not need to solve the RL problem exactly. This way we can model, e.g., human experts that occasionally make mistakes. Finally, we assume that the learner agent cannot observe the contextual information $U$. Therefore, the learner implements a context-unaware policy function $\pi(s_t, a_t) :  \mathcal{S} \times \mathcal{A} \rightarrow [0,1]$ that denotes the probability of selecting action $a_t$ only based on state $s_t$. The goal of the learner agent is to find an optimal policy $\pi^\ast(s_t, a_t)$ which solves the problem
\vspace{-2mm}
\begin{align} \label{eqn:LearnerProblem}
	\max_{\pi} \; \mathbb{E}_{\rho_{0}, P(U)} \big[ \sum_{t=0}^{T} \gamma^{t} r^U(s_{t}, a_{t})  |\pi(s_t, a_t)  \big].
\end{align}

\begin{figure}
	\centering
	\subfigure[\footnotesize Expert]{\label{subfig:demostrator}\includegraphics[scale=0.35]{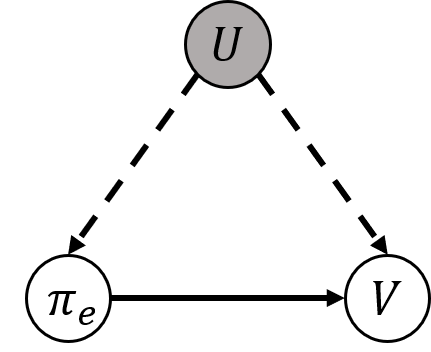}}\qquad \qquad
	\subfigure[\footnotesize Auxiliary expert]{\label{subfig:auxillary}\includegraphics[scale=0.35]{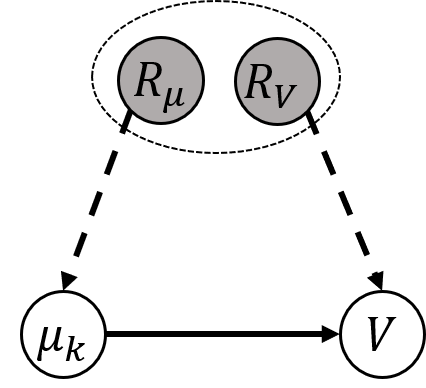}}\qquad \qquad
	\subfigure[\footnotesize Learner]{\label{subfig:learner}\includegraphics[scale=0.35]{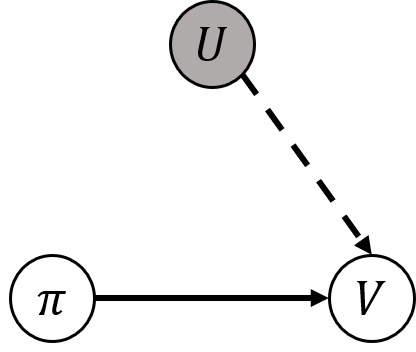}}
	\caption{\small Expert and learner causal graphs. (\textit{a}) represents the causal graph of expert; (\textit{b}) represents the auxiliary causal graph of expert where the context $U$ is replaced by $\{R_{\mu}, R_{V}\}$; (\textit{c}) denotes the causal graph of the learner. The gray nodes represent the unobservable context $U$, and the white nodes represent observable data. The arrows indicate the directions of the causal effects.}
	\label{fig:causal graph}
	\vspace{-6mm}
\end{figure}
\vspace{-4mm}
\subsection{Experience Transfer under Unobserved Contextual Information}
\label{sec:ProblemForm}
Let $\tau_D = \{\tau_1, \tau_2, \dots, \tau_N\}$ denote a dataset generated by the expert, where $\tau_i = \{s_0^i, a_0^i, s_1^i, a_1^i,$ $\dots,s_T^i, a_T^i, V^i\}$ represents the state-action trajectory collected during the $i$th execution of the context-aware policy $\pi_e^\ast$, and $V^i = \sum_{t=0}^T \gamma^t r^U(s_t^i, a_t^i)$ is the total reward received at the end of this episode. 
Specifically, to generate the expert data, at the beginning of episode $i$, we sample the context variable $U^i$ independently from the distribution $P(U)$. Then, given the sampled context $U^i$, the expert interacts with the environment using policy function $\pi^*_e(s_t^i, a_t^i, U^i)$ and collects data $\tau_i$. The contextual information $U^i$ is not recorded in dataset $\tau_D$. The expert data collection process is reflected in the causal graph in Figure~\ref{fig:causal graph}(a). 
Finally, the dataset $\tau_D$ without the contextual information $U$ is transferred to a learner agent with the goal to learn a context-unaware policy $\pi$, as described in the causal graph in FIgure~\ref{fig:causal graph}(c). Specifically, we aim to solve the following problem.
\begin{problem} \label{Prob:Transfer}
	Given the experience data $\tau_D$ generated by an expert agent that uses a context-aware policy $\pi_e^\ast$, design a transfer learning algorithm for a learner agent that uses the data $\tau_D$ excluding the contextual information to learn   the optimal context-unaware policy $\pi^\ast(s_t, a_t)$ that solves the optimization problem~\eqref{eqn:LearnerProblem} with only a few new data samples.
\end{problem}

In the absence of any unobserved contextual effects, Problem~\ref{Prob:Transfer} has been studied in \cite{Gail,BC}. In these works, a single policy function that generates the experience dataset $\tau_D$ with the maximum likelihood is computed and is directly transferred to the learner agent. When the dataset $\tau_D$ is generated as per the causal graph in Figure~\ref{fig:causal graph}(a), the methods in \cite{Gail,BC} essentially compute a mixture of the context-aware policy $\pi_e^\ast$ based on the context distribution $P(U)$. On the other hand, in the presence of unobserved contextual information, a possible approach to Problem~\ref{Prob:Transfer} is to first use the unsupervised clustering methods in \cite{multi_modal,infogail} on the dataset $\tau_D$ to obtain a cluster of basis policy functions $\{ \mu_k \}_{k = 1, 2, \dots, K}$, so that each function $\mu_k(s_t, a_t): \mathcal{S} \times \mathcal{A} \rightarrow [0,1]$ approximates the underlying true expert behavior $\pi_e^\ast$ associated with a given context. Then, we can select the basis policy function with the best empirical expected return in the dataset $\tau_D$. Next, we give an example to show that both these approaches can lead to a policy function that is far from the true optimal policy. 

\begin{table}[t]
	\begin{subtable}
		\centering
		\begin{tabular}{c | c | c}
			$r^U(a_t)$& $U = 0$ & $U = 1$ \\
			\hline
			$a_t = 0$ & 1 & 11 \\ 
			$a_t = 1$ & 3 & 10  \\ 										
		\end{tabular}
	\end{subtable}
	\hfill
	\begin{subtable}
		\centering
		\begin{tabular}{c | c | c | c | c }
			$P(a_t, r^U)$ & $r^U = 1$ & $r^U = 3$ &  $r^U = 10$ & $r^U = 11$ \\
			\hline
			$a_t = 0$ & 0.05 & 0 & 0 & 0.45 \\
			$a_t = 1$ & 0 & 0.45 & 0.05 & 0 \\
		\end{tabular}
	\end{subtable}
	\caption{\small Reward and Observational Distribution for the 2-arm Bandit Problem in Example~\ref{exm:Bandit}} \label{tab:Bandit}
	\vspace{-4mm}
\end{table}

\begin{example} \label{exm:Bandit}
	Consider a 2-arm bandit problem, where the action is binary, i.e., $a_t \in \{0,1\}$. At the beginning of each episode, a binary contextual variable $U \in \{0,1\}$ is sampled from a Bernoulli distribution $P(U = 0) = \frac{1}{2}$. The binary variable $U$ affects the reward $r^U(a_t)$ received during each episode according to Table~\ref{tab:Bandit} (a).
	The expert agent can observe the contextual variable $U$ and executes a context-aware policy $\pi_e^\ast(a_t, U)$, such that $\pi_e^\ast(a_t = 1, U = 0) = 0.9$ and $\pi_e^\ast(a_t = 0, U = 1) = 0.9$. As a result, the expert generates multiple data points $\{a_t, r_t\}$ to obtain an observational distribution $P(a_t, r_t)$ as shown in Table~\ref{tab:Bandit} (b). Denote the true expected reward when taking action $a_t = 0$ as $\mathbb{E}\big[r_t | do(a_t = 0)\big]$. Then, we have that $\mathbb{E}\big[r_t | do(a_t = 0)\big] = 6$ and $\mathbb{E}\big[r_t | do(a_t = 1)\big] = 6.5$. The true optimal policy is to select $a_t = 1$ with probability $1$.
	
	Given the underlying contextual variable distribution $P(U = 0) = 1/2$, the Direct Imitation strategy in \cite{Gail,BC} learns a policy $\pi_{DI}(a_t = 0) = 1/2$ and $\pi_{DI}(a_t = 1) = 1/2$, which is far from the true optimal policy. On the other hand,
	in this 2-arm bandit problem, 
	the basis policy function can be directly obtained as $\{\mu_1, \mu_2\}$, where $\mu_1(a_t = 0) = 1$ and $\mu_2(a_t = 1) = 1$. The empirical estimates of the expected rewards for these two basis policies can be computed using the observational distribution in Table~\ref{tab:Bandit} (b). Specifically, we have that $\mathbb{E}[r^U | a_t = 0] = 10$ and $\mathbb{E}[r^U | a_t = 1] = 3.7$. This suggests that we should transfer the basis policy function $\mu_1$ to the learner agent, which is the opposite of the true optimal policy. 	
\end{example}

Example~\ref{exm:Bandit} shows that selecting the mixture policy $\pi_{DI}$ or the basis policy function with the best observed rewards can both lead to sub-optimal policies. In addition, unlike the bandit problem in Example~\ref{exm:Bandit}, for general continuous RL problems, the optimal context-unaware policy $\pi^\ast$ may not belong to the set of basis policy functions obtained using the algorithms in \cite{multi_modal,infogail}. In the next section, we propose a new TL framework that can address these challenges.

\section{Continuous Transfer Learning with Unobserved Contextual Information}\label{sec:methods}
In this section, we propose a TL framework to solve Problem~\ref{Prob:Transfer}. First, we compute a set of basis policies from the dataset $\tau_D$. Then, we formulate a MAB problem that uses these basis policies as its arms and the learner agent can learn to select the best basis policy function to improve online. Finally, to reduce the exploration variance of the MAB problem, we constrain exploration using causal bounds on the accumulated rewards that can be achieved by these basis policy functions. 

\subsection{Multi-Armed Bandit Formulation}
The first step in defining a MAB model for Problem~\ref{Prob:Transfer} is to use the unsupervised clustering methods in \cite{multi_modal,infogail} to obtain a cluster of basis policy functions $\{ \mu_k \}_{k = 1, 2, \dots, K}$ from the expert dataset $\tau_D$, as discussed in Section~\ref{sec:ProblemForm}. Each function $\mu_k(s_t, a_t): \mathcal{S} \times \mathcal{A} \rightarrow [0,1]$ represents a distinct expert behavior under a specific context, e.g., driving cautiously or aggressively. We assume that the expert policy $\pi_e^\ast(s_t, a_t, U)$ is a mixture of the basis policies $\{\mu_k\}_{k = 1, 2, \dots, K}$, i.e., $\pi_e^\ast(s_t, a_t, U) = \sum_{k=1}^K \mu_k(s_t, a_t)P(\mu_k | U)$, where $P(\mu_k | U)$ denotes the conditional probability of selecting the basis policy $\mu_k$ when context $U$ is observed. Through this policy transformation, we obtain an auxiliary causal graph for the expert, as described in Figure~\ref{fig:causal graph}(b). And the dataset $\tau_D$ can be written as $\{m_i, V_i\}$, where $m_i \in \{1, 2, \dots, K\}$ represents the index of the basis poilcy function that generates the $i$th trajectory $\tau_i$ in dataset $\tau_D$ and $V_i$ denotes its accumulated rewards.
\begin{algorithm}[t]
	\begin{small}
		\caption{Transfer RL under Unobserved Contextual Information}
		\label{alg:TransferRL}
		\begin{algorithmic}[1]
			\REQUIRE The set of basis policy functions $\{\mu_k\}$, the causal bounds on the expected total rewards of each policy function $[l_k, h_k]$, a non-increasing function $f(i)$ and $T_k(0) = 1$ for all $k$.\\
			\STATE{	Remove any arm $k$ with $h_{k}<l_{max}$, where $l_{max} = \max_k \{l_k\}$.}
			\STATE{ Execute all basis policies $\mu_k$ once and record the resulting accumulated rewards $\{\hat{V}_k(0)\}$.}
			\FOR{ episode $i = 1,2, \dots$}
			\STATE{ For every basis policy function $\mu_k$, compute $\hat{H}_{k}(i)= \min \{H_{k}(i), h_{k}\}$, where
				\begin{align} \label{eqn:UCB}
					H_{k}(i)=\hat{V}_{k}(i-1)+\sqrt{\frac{2f(i)}{T_{k}(i-1)}};
			\end{align}}
			\STATE{ Select the basis policy function $\mu_{k^\ast}$ such that $k^\ast = \arg \max_{k}\hat{H}_{k}(i)$; \\}
			\STATE{ Execute the policy $\mu_{k^\ast}$ during episode $i$ and update it using policy optimization algorithms, e.g., PPO;}
			\STATE{ At the end of the episode, compute the accumulated rewards $V(i)$ and update $\hat{V}_{k^\ast}(i)$ as $\hat{V}_{k}(i) = \frac{1}{i+1} V(i) + \frac{i}{i+1} \hat{V}_{k}(i-1)$ if $k = k^\ast$, and $\hat{V}_{k}(i) =  \hat{V}_{k}(i-1)$ otherwise;}
			\STATE{ Update the number of times every basis policy function has been selected as $T_k(i) = T_{k}(i-1) + 1$ if $k = k^\ast$, and $T_k(i) = T_k(i-1)$ otherwise.}
			\ENDFOR
			\STATE{Output policy $\mu_{k^\ast}$.}
		\end{algorithmic}
	\end{small}
\end{algorithm}

Next, we use the set $\{\mu_k\}$ as initial policy parameters. Each one of these policy parameters gives rise to an arm in a MAB problem, which we solve using the Upper Confidence Bound (UCB) algorithm to obtain the optimal initial policy in $\{\mu_k\}$.
Then, we can improve this policy through online interactions using any existing policy optimization algorithm, e.g., the Proximal Policy Optimization (PPO) algorithm in \cite{PPO} with experience replay. This process is presented in Algorithm~\ref{alg:TransferRL}. Specifically, we start by evaluating the basis policy functions $\mu_k$ by executing them in the environment once and computing their accumulated rewards $\{\hat{V}_k(0)\}$ (line 2). Then, using these estimates of the total rewards we can compute the UCBs on the expected values of the basis policy functions $H_k(i)$ according to \eqref{eqn:UCB} (line 4). Next, we select the basis policy function $\mu_{k^\ast}$ that has the highest UCB on its expected value (line 5), execute this policy for the duration of one episode, and improve it using the PPO (line 6). At the end of each episode, we compute the total rewards of the improved basis policy $\mu_{k^\ast}$, and update the estimate of the expected value $\hat{V}_{k^\ast}$ using this policy (line 7) as well as the number of times $T_k$ this policy has been selected (line 8). Finally, Algorithm~\ref{alg:TransferRL} returns to line 4 and a new episode is initialized. 

Note that the UCB algorithm proposed in Algorithm~\ref{alg:TransferRL} gradually explores more frequently the most promising basis policy function that performs well initially and is improved at the fastest rate using new online data. 
However, this UCB algorithm may suffer large variance during exploration, which affects its learning rate. 
In the next section, we provide bounds on the expected returns of each basis policy function $\mathbb{E}\big[V | do(\mu_k)\big]$ using the expert data $\tau_D$ and transfer these bounds to the UCB algorithm to reduce its exploration variance.

\subsection{Causal Bound Constrained MAB}
When the contextual variable is missing from the experience data $\tau_D$, as discussed in \cite{causal_mab}, the true expected reward $\mathbb{E}[V | do(\mu)]$ cannot be identified. However, it is possible to obtain bounds on $\mathbb{E}[V | do(\mu)]$ using the observational data $\{m_i, V_i\}$, similar as \cite{zhang2020bounding}. To do so, define the unobserved contextual variable $U$ by a pair of random variables $(R_\mu, R_V)$, where $R_\mu$ has support $\{1, 2, \dots, K\}$ and $R_V$ has support $\mathbb{R}$. The random variable $R_\mu$ represents the basis policy function that the expert decides to adopt in practice, i.e., $\mu_{R_\mu} = f_\mu(R_\mu)$, while the random variable $R_V$ represents the accumulated rewards received when the expert executes policy $\mu_k$, i.e., $R_V = f_V(\mu_k, R_V)$. The causal relationship betweeen the observed expert policy $\mu$, the observed accumulated rewards $V$, and the unobserved random variable pair $(R_\mu, R_V)$ is shown in Figure~\ref{fig:causal graph}(b). If the RL problem has bounded rewards and is episodic, then the random accumulated rewards $R_V$ have a compact support $R_V \in [V_l, V_u]$. Then, given the observed distribution $P(\mu, V)$, we can obtain the following bounds on the causal effect $\mathbb{E}[V | do(\mu)]$.


\begin{thm} \label{thm:CB}
	If $R_V \in [V_l, V_u]$, then we have that 
	\begin{align} \label{eqn:CB_1}
		\mathbb{E}\big[ V | \mu_k \big] P(\mu_k) + (1- P(\mu_k)) V_l\leq \mathbb{E}\big[ V | do(\mu_k) \big] \leq  \mathbb{E}\big[ V | \mu_k \big] P(\mu_k) + (1 - P(\mu_k)) V_u.
	\end{align}
	Furthermore, let $\mathbb{E}\big[ V | \mu_j, R_\mu = k \big]$ represent the expected return when the expert agent counterfactually executes policy $\mu_j$ instead of the policy $\mu_k$ that was used by the expert to generate the data, so that $R_\mu = k$. 
	If $\; \mathbb{E}\big[ V | \mu_j, R_\mu = k \big] \leq \mathbb{E}\big[ V | \mu_k, R_\mu = k \big]$ for all $j \neq k$, then we have that
	\vspace{-2mm}
	\begin{align} \label{eqn:CB_2}
		\mathbb{E}\big[ V | \mu_k \big] P(\mu_k) + (1- P(\mu_k)) V_l\leq \mathbb{E}\big[ V | do(\mu_k) \big] \leq  \sum_{j=1}^K \mathbb{E}\big[ V | \mu_j \big] P(\mu_j).
	\end{align}	
\end{thm}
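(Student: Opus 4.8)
The plan is to read the interventional expectation directly off the structural causal model encoded by the auxiliary graph in Figure~\ref{fig:causal graph}(b), in which the exogenous pair $(R_\mu, R_V)$ plays the role of the hidden context $U$, the observed policy index satisfies $\mu = \mu_{R_\mu}$, and the observed return is a deterministic function $V = f_V(\mu, R_V)$ of the executed policy and the hidden reward noise. Writing $V(\mu_k)$ for the potential return obtained when the policy is forced to $\mu_k$ and setting $P(\mu_j) := P(R_\mu = j)$, the intervention $do(\mu_k)$ overrides the expert's natural policy choice while leaving the distribution of the context intact. First I would apply the law of total expectation over $R_\mu$ to obtain the mixture decomposition
\begin{align} \label{eqn:plan_decomp}
  \mathbb{E}[V \mid do(\mu_k)] = \sum_{j=1}^K \mathbb{E}[V(\mu_k) \mid R_\mu = j]\, P(\mu_j).
\end{align}

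Next I would isolate the single identifiable term. By consistency, on the episodes where the expert naturally selected $\mu_k$ (i.e. $R_\mu = k$) the observed return coincides with the potential return, so $\mathbb{E}[V(\mu_k) \mid R_\mu = k] = \mathbb{E}[V \mid \mu_k]$, contributing the term $\mathbb{E}[V\mid\mu_k]\,P(\mu_k)$ that appears in both \eqref{eqn:CB_1} and \eqref{eqn:CB_2}. The remaining terms $\mathbb{E}[V(\mu_k) \mid R_\mu = j]$ with $j \neq k$ are genuinely counterfactual---they ask what return $\mu_k$ would have produced on the episodes the expert chose to run $\mu_j$---and cannot be recovered from $P(\mu, V)$ alone; they are exactly where the bounds must enter.

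For \eqref{eqn:CB_1}, I would bound each counterfactual term crudely using only the hypothesis that every realizable return lies in $[V_l, V_u]$, so that $V_l \leq \mathbb{E}[V(\mu_k) \mid R_\mu = j] \leq V_u$ for every $j \neq k$. Substituting into \eqref{eqn:plan_decomp} and using $\sum_{j \neq k} P(\mu_j) = 1 - P(\mu_k)$ yields the two-sided bound. For the sharper upper bound in \eqref{eqn:CB_2}, I would replace the worst-case value $V_u$ by the expert-optimality hypothesis. Reading that hypothesis for the context in which the expert actually chose $\mu_j$---namely $\mathbb{E}[V \mid \mu_k, R_\mu = j] \leq \mathbb{E}[V \mid \mu_j, R_\mu = j]$---and invoking consistency on the right-hand side so that $\mathbb{E}[V(\mu_j)\mid R_\mu=j] = \mathbb{E}[V\mid\mu_j]$, gives $\mathbb{E}[V(\mu_k)\mid R_\mu = j] \leq \mathbb{E}[V\mid\mu_j]$. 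Summing against $P(\mu_j)$ and adding back the identified term collapses the bound to $\sum_{j=1}^K \mathbb{E}[V\mid\mu_j]\,P(\mu_j)$, while the lower bound is inherited unchanged from \eqref{eqn:CB_1}.

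I expect the main obstacle to be the causal bookkeeping rather than any inequality: precisely justifying the decomposition \eqref{eqn:plan_decomp} and the consistency identity from the semantics of Figure~\ref{fig:causal graph}(b), and---more subtly---applying the expert-optimality assumption to the correct conditioning context. The theorem states the hypothesis for a fixed reference arm $k$, but the upper bound in \eqref{eqn:CB_2} requires the analogous inequality in every context $j$ where the expert deviated from $\mu_k$. I would therefore make explicit that the optimality assumption is intended to hold for each $k \in \{1,\dots,K\}$ (the expert selects the best basis policy in every realized context) and apply it with the roles of $j$ and $k$ interchanged.
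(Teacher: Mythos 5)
Your proof is correct and follows essentially the same route as the paper's: the same decomposition of $\mathbb{E}[V \mid do(\mu_k)]$ over the hidden choice $R_\mu$, identification of the $R_\mu = k$ term via consistency, worst-case bounds $V_l \leq \mathbb{E}[V(\mu_k) \mid R_\mu = j] \leq V_u$ on the counterfactual terms for \eqref{eqn:CB_1}, and the expert-optimality hypothesis to sharpen the upper bound for \eqref{eqn:CB_2}. Your closing observation is also on point: the paper's own proof silently applies the optimality hypothesis with $j$ and $k$ interchanged (its step \eqref{eqn:proof_2} actually requires $\mathbb{E}[V \mid \mu_k, R_\mu = j] \leq \mathbb{E}[V \mid \mu_j, R_\mu = j]$ for every $j \neq k$, not the stated inequality for the fixed arm $k$), so your explicit strengthening of the assumption to hold in every realized context is exactly the right fix.
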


\begin{proof}
	Recall that 
	\begin{align} \label{eqn:CausalEffect}
		\mathbb{E}[V | do(\mu_k)] =  \int_{R_\mu = k, R_V} f_V(\mu_k, R_V) dP(R_\mu = k, R_V) + \int_{R_\mu \neq k, R_V} f_V(\mu_k, R_V) dP(R_\mu \neq k, R_V),
	\end{align}
	where the first term in \eqref{eqn:CausalEffect} represents the expected accumulated rewards of the trajectories observed in the  dataset $\tau_D$ that are generated using policy $\mu_k$, i.e., 
	\begin{align} \label{eqn:ObservedReward}
		\int_{R_\mu = k, R_V} f_V(\mu_k, R_V) dP(R_\mu = k, R_V) & = P(R_\mu = k) \int_{R_V} f_V(\mu_k, R_V) dP(R_V | R_\mu = k) \nonumber \\
		& = \mathbb{E}\big[ V | \mu_k \big] P(\mu_k).
	\end{align}
	On the other hand, the second term in \eqref{eqn:CausalEffect} represents the counterfactual accumulated rewards received when executing policy $\mu_k$, although other policy functions $\mu_{j \neq k}$ are actually executed in the dataset $\tau_D$. Since policy $\mu_k$ was not actually executed when $R_\mu \neq k$, its counterfactual accumulated rewards are never observed in expert dataset $\tau_D$,
	and we can only bound $\int_{R_\mu \neq k, R_V} f_V(\mu_k, R_V) dP(R_\mu \neq k, R_V)$ using the bounds on the random variable $R_V$, i.e.,
	\begin{align} \label{eqn:BoundCounterfact}
		V_l \big(1 - P(\mu_k)\big) \leq \int_{R_\mu \neq k, R_V} f_V(\mu_k, R_V) dP(R_\mu \neq k, R_V) \leq  V_u \big(1 - P(\mu_k)\big).
	\end{align}
	Substituting the bounds \eqref{eqn:ObservedReward} and \eqref{eqn:BoundCounterfact}  into \eqref{eqn:CausalEffect}, we obtain the bound in \eqref{eqn:CB_1}.
	
	Now, assuming that $\mathbb{E}\big[ V | \mu_j, R_\mu = k \big] \leq \mathbb{E}\big[ V | \mu_k, R_\mu = k \big]$ for all $j \neq k$, we have that $\int_{R_\mu = k, R_V}$ $f_V(\mu_j, R_V) dP(R_\mu = k, R_V) \leq \int_{R_\mu = k, R_V} f_V(\mu_k, R_V) dP(R_\mu = k, R_V)$
	for every $j \neq k$. Using this bound, we obtain that
	\begin{align} \label{eqn:proof_2}
		\int_{R_\mu \neq k, R_V} f_V(\mu_k, R_V) dP(R_\mu \neq k, R_V) \leq \sum_{j \neq k} f_V(\mu_j, R_V) dP(R_\mu = j, R_V).
	\end{align}
	Combining the bound~\eqref{eqn:proof_2} with \eqref{eqn:CausalEffect}, we obtain the bound in \eqref{eqn:CB_2}. The proof is complete.
\end{proof}

Note that the assumption $\mathbb{E}\big[ V | \mu_j, R_\mu = k \big] \leq \mathbb{E}\big[ V | \mu_k, R_\mu = k \big]$ in Theorem~\ref{thm:CB} suggests that the basis policy $\mu_k$ implemented by the expert is better than any other policy in expectation. This holds when the expert policy is close to the optimal context-aware poilcy $\pi_e^\ast(a_t, s_t, U)$. Same as in \cite{causal_mab}, here too we can incorporate in Algorithm~\ref{alg:TransferRL} the bounds $[l_k, h_k]$ on the causal effect $\mathbb{E}\big[ V | do(\mu_k) \big]$ computed using Theorem~\ref{thm:CB} for every basis policy function $\mu_k$. 
To do so, we compare the bound $H_k$ in \eqref{eqn:UCB} computed using online data to the upper bound $h_k$, and select the tighter one  as an estimate for the UCB $\hat{H}_k$ (line 4 in Algorithm~\ref{alg:TransferRL}). Observe that during the early learning stages, the UCB $H_k$ computed using new online data can be over-optimistic for suboptimal basis policies. The causal bound $h_k$ obtained using the expert's experience can tighten these bounds and, therefore, reduce the number of queries to suboptimal basis policies. This can significantly improve the efficiency of the data collection for the learner by reducing the variance of the UCB exploration, as we also show in Section~\ref{sec:experiment}.

\section{Numerical Experiments}\label{sec:experiment}
In this section, we demonstrate the effectiveness of the proposed transfer RL algorithm on an autonomous racing car example. Specifically, the goal is to transfer driving data from an expert driver that has a global mental view of the race track (e.g., from past experience driving on this track) to a learner autopilot that can only perceive the race track locally using the vehicle's on-board sensors. In practice, the expert's decisions will be affected by their mental view of the race track, e.g., the anticipated shape of the track ahead. However, this information is not recorded in the expert's data and, therefore, is not available to the learner autopilot. As such, it constitutes unobservable context.

\begin{figure}[t]
	\centering
	\subfigure[\label{fig:env}]{
		\includegraphics[width=0.38\linewidth]{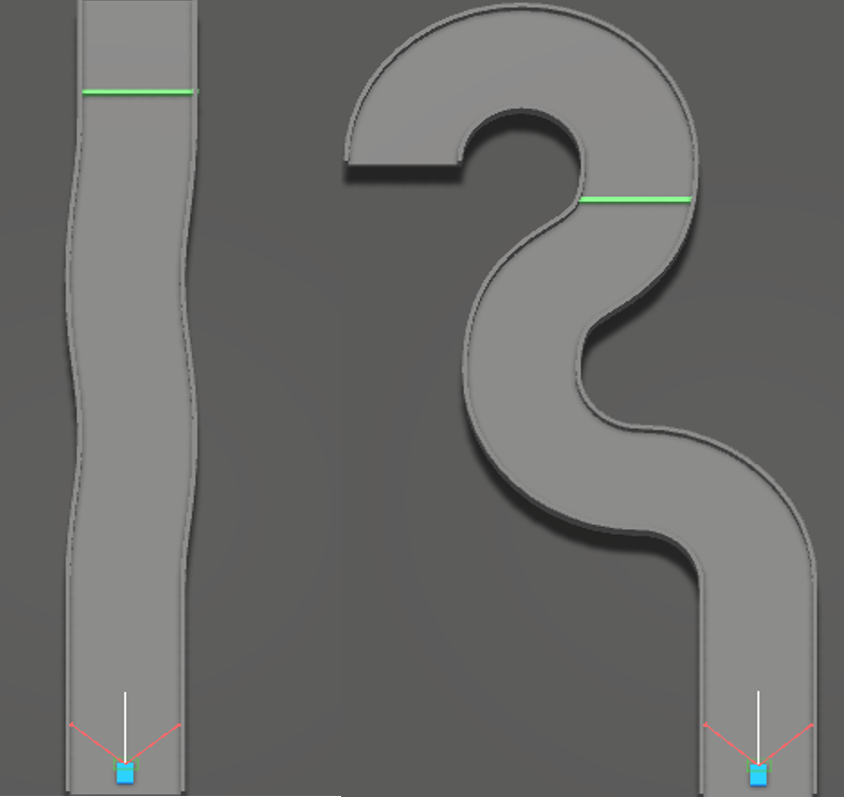}		
	}
	\subfigure[\label{fig:comparison}]{
		\includegraphics[width = 0.5\linewidth]{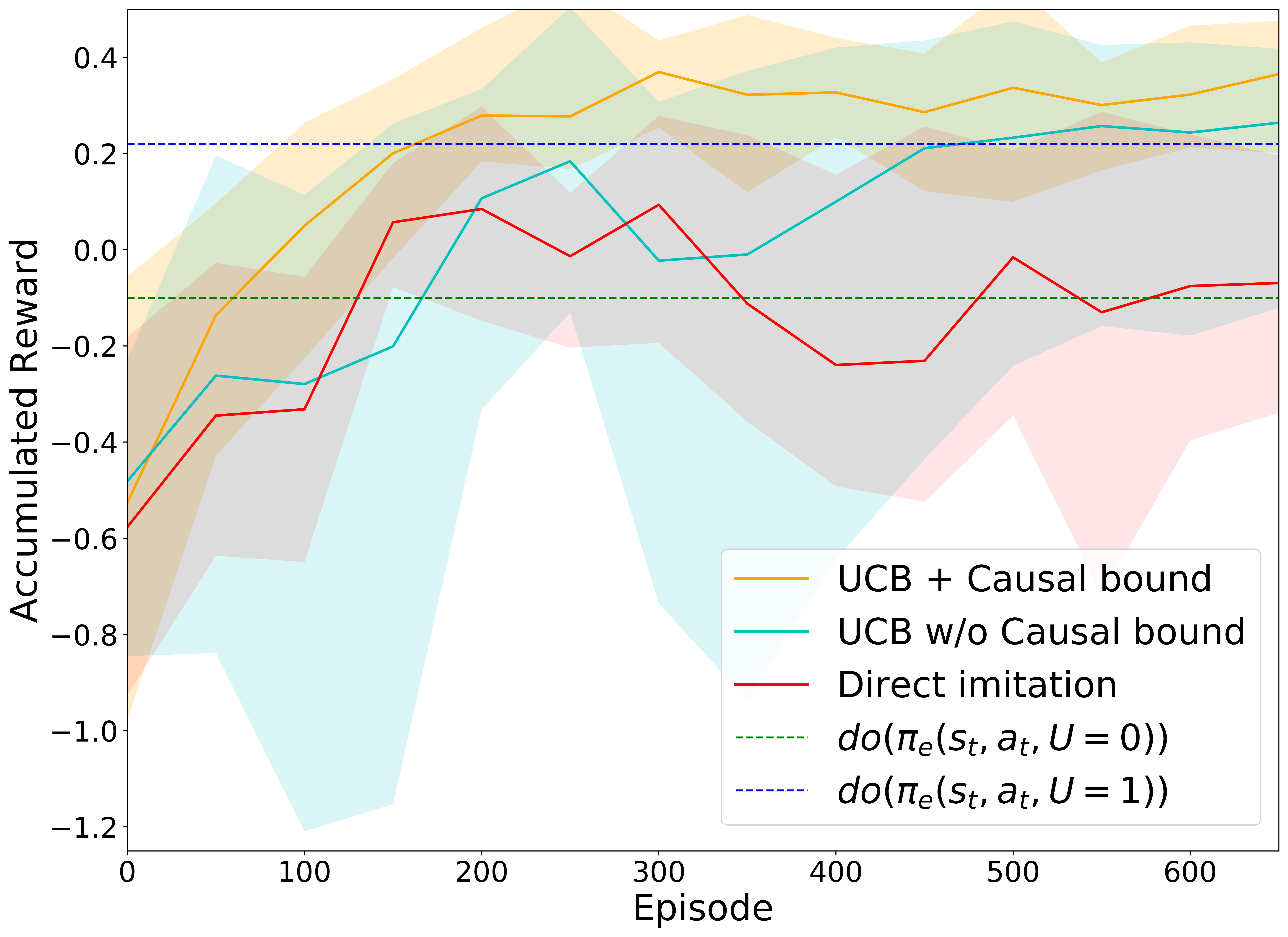}
	} 
	\caption{\small In (a), two types of tracks are presented. The straight track denotes the environment under $U = 0$ and the curved track denotes the environment under $U = 1$; In (b), the performance of Algorithm~\ref{alg:TransferRL} (orange), Algorithm~\ref{alg:TransferRL} without causal bounds (blue) and transferring a single policy learnt by direct imitation learning (red) are presented. Each curve is obtained by running 10 trials.}
	\label{fig:env_comparison}
	\vspace{-6mm}
\end{figure}

We simulate the above problem using the Unity Physics Engine, \cite{juliani2018unity}. Specifically, we consider the two types of tracks shown in Figure~\ref{fig:env}. 
Let $U = 0$ denote the straight track and $U = 1$ denote the curved track. We assume that the contextual variable $U$ is sampled from a Bernoulli distribution $P(U = 0) = 0.7$ at the beginning of each episode. The green lines in both tracks represent the finish line. In Figure~\ref{fig:env}, the vehicle (blue box) is equipped with an onboard sensor (three lines emitting from the car) that detects the track boundaries, if within sensor range, and their distance to the vehicle. The vehicle takes actions in the set $\mathcal{A} := \{0,1,2,3,4\}$, where action $0$ means "maintain current speed", action $1$ means "accelerate", action $2$ means "decelerate", action $3$ means "turn clockwise", and action $4$ means "turn counterclockwise". The policy of the learner autopilot is denoted by $\pi(s_t, a_t)$, where the state $s_t$ represents the vehicle's orientation, velocity, last-step action, and the sensor measurements. On the other hand, the policy of the expert driver is denoted by $\pi_e^\ast(s_t, a_t, U)$, where the context variable $U$ denotes the type of the track that the expert has experience driving on.
The goal is to drive the vehicle to the finish line in the shortest period of time while avoiding collisions with the track boundaries. 
To achieve this, let the learner autopilot receive a reward $r = 1$ when it reaches the finish line and $r = -0.08$ when the vehicle collides with the track boundaries. Otherwise, the agent receives a reward $r = -0.003$. Each episode ends when the vehicle reaches the finish line or when 500 time steps have elapsed.
We trained the expert policies $\pi_e^\ast(s_t, a_t, U = 0 \text{ or } 1)$ for each track independently using the PPO algorithm. Then, we executed the expert policies 300 times to construct the dataset $\tau_D$ as discussed in Section~\ref{sec:ProblemForm}.

After constructing the dataset $\tau_D$, we transfer it to the learner autopilot and cluster the trajectories in $\tau_D$ into two sets using the K-means algorithm, where the number of clusters can be determined using Principle Component Analsysis. Then, using behavior cloning \cite{BC}, the learner agent learns two basis policy functions $\{\mu_0,\mu_1\}$ from the two trajectory sets, where the policy $\mu_0$ (or $\mu_1$) approximates the expert policy $\pi_e^\ast(s_t, a_t, U=0 \text{ or } 1)$. After clustering, the expert dataset $\tau_D$ can be written as $\{m_i, V_i\}$, where $m_i \in \{0,1\}$ denotes the index of the basis policy function that generates the $i$th trajectory $\tau_i$ in $\tau_D$. Using the data $\{m_i, V_i\}$, the learner agent can compute the emprical mean of the returns $\mathbb{E}\big[V|\mu_k\big]$ and the causal bounds on the true expected returns, as in \eqref{eqn:CB_2}.
In addition, we can evaluate the true expected returns $\mathbb{E}\big[V|do(\mu_0 \text{ or } \mu_1)\big]$ of the expert policies $\pi_e^\ast(s_t, a_t, U = 0 \text{ or } 1)$ through simulations. These results are presented in Table~\ref{tab:demo_data}. 
Observe that the basis policy function $\mu_1$ achieves higher true expected accumulated rewards compared to the policy $\mu_0$, i.e., $\mathbb{E}\big[V|do(\mu_1)\big] > \mathbb{E}\big[V|do(\mu_0)\big]$. Therefore, $\mu_1$ is a better policy to select.
However, if the basis policy function is selected based on its empirical expected returns $\mathbb{E}\big[V|\mu_k\big]$, the wrong policy $\mu_0$ will be selected. Furthermore, the causal bounds~\eqref{eqn:CB_2} in Table~\ref{tab:demo_data} correctly characterize the range of 
the value of $\mathbb{E}\big[V|do(\mu_k)\big]$.
\begin{table}[t]
	\centering
	\begin{tabular}{c | c | c | c | c }
		\hline 
		& $\mathbb{E}[V | \mu_{k} ]$ &$\mathbb{E}[V|do(\mu_{k})]$ & $l_{k}$ & $h_{k}$  \\
		\hline 
		$do(\mu_{0})$ & 0.479 & -0.10 &-0.89 & 0.407\\
		
		$do(\mu_{1})$ & 0.244  & 0.22 &-2.69 & 0.407\\		
		\hline
	\end{tabular}
	\caption{\small Empirical mean, true mean and causal bounds on the accumulated rewards.}
	\label{tab:demo_data}
	\vspace{-2mm}
\end{table}

Having constructed the basis policy functions $\{\mu_k\}$, we then compared Algorithm~\ref{alg:TransferRL} to a MAB method on the set of basis functions without causal bounds, and the direct imitation learning method in \cite{BC} that learns a single policy from the expert dataset $\tau_D$ and transfers this policy directly to the learner agent. The results are presented in Figure~\ref{fig:comparison}. Observe that TL in the case of MAB without causal bounds (blue curve) or direct imitation learning (red curve) suffers large variance during online training. This is because both the policy $\pi_{DI}$ obtained from $\tau_D$ using direct imitation learning and the basis policy function $\mu_0$ selected by MAB are far from the optimal policy. Improving these suboptimal polices online generally results in higher variance during training compared to improving a good quality policy, e.g., the basis policy function $\mu_1$. 
On the other hand, our proposed Algorithm~\ref{alg:TransferRL} uses the causal bounds to reduce the variance of MAB exploration and finds the best basis policy function much faster than MAB without causal bounds. To elaborate, when the UCB on the expected return of the suboptimal basis policy function computed using \eqref{eqn:UCB} is over-optimistic, the causal bounds project this UCB to a tigher bound. This reduces the number of times the suboptimal basis policy function is explored. Consequently, Algorithm~\ref{alg:TransferRL} (orange curve) can find the optimal policy within much fewer episodes compared to the other two methods and enjoys the lowest training variance, as shown in Figure~\ref{fig:comparison}.

\section{Conclusion}\label{sec:conclusion}
In this paper, we considered a transfer RL problem in continuous state and action spaces under unobserved contextual information. The goal was to use data generated by an expert agent that knows the context to learn an optimal policy for a learner agent that can not observe the same context, using only a few new data samples. To this date, such problems are typically solved using imitation learning that assumes that both the expert and learner agents have access to the same information. However, if the learner does not know the expert context, using expert data alone will result in a biased learner policy and will require many new data samples to improve. To address this challenge, in this paper, we proposed a MAB method that uses the expert data to train initial basis policies that serve as the arms in the MAB and to compute causal bounds on the accumulated rewards of these basis policies to reduce the MAB exploration variance and improve the learning rate. We provided numerical experiments on an autonomous driving example that showed that our proposed transfer RL method improved the learner's policy faster compared to imitation learning methods and enjoyed much lower variance during training. 

\acks{This work is supported in part by AFOSR under award \#FA9550-19-1-0169 and by NSF under award CNS-1932011.}

\bibliography{citation}

\end{document}